\documentclass[11pt,a4paper]{article}

\usepackage{jacad-onecol-arxiv}

\usepackage{amsmath,amssymb,amsthm}
\usepackage{algorithm}
\usepackage{algpseudocode}
\usepackage{float}
\usepackage{booktabs}
\usepackage{longtable}
\usepackage{array}
\usepackage{enumitem}
\usepackage{pdfpages}
\usepackage[utf8]{inputenc}
\usepackage[T1]{fontenc}
\usepackage{newunicodechar}

\newunicodechar{Ḥ}{\d{H}}
\newunicodechar{ḥ}{\d{h}}
\newunicodechar{Ṭ}{\d{T}}
\newunicodechar{ṭ}{\d{t}}
\newunicodechar{Ṣ}{\d{S}}
\newunicodechar{ṣ}{\d{s}}
\newunicodechar{Ā}{\={A}}
\newunicodechar{ā}{\={a}}
\newunicodechar{Ī}{\={I}}
\newunicodechar{ī}{\={i}}
\newunicodechar{Ū}{\={U}}
\newunicodechar{ū}{\={u}}
\newunicodechar{ʿ}{\textquoteleft}

\newtheorem{theorem}{Theorem}
\newtheorem{axiom}[theorem]{Axiom}
\newtheorem{corollary}[theorem]{Corollary}
\newtheorem{definition}[theorem]{Definition}
\newtheorem{remark}[theorem]{Remark}
\newtheorem{principle}[theorem]{Principle}
\newtheorem{heuristic}[theorem]{Heuristic Principle}

\arxivjournalref{J.Acad. (N.Y.)15,1:3--18}
\arxivyear{2026}
\arxivpagecount{16}
\arxivshortauthor{Abdelwahab, N.}

\copyrightyear{2026}

\arxivhistory{%
Originally received June 14, 2026;
Revised: June 17, 2026;
Accepted June 26, 2026;
published July 20, 2026%
}

\arxivpublicationline{%
JAcad(NY)15,1:3--18 (16 pages) --
Theoretical Computer Science -- permalink:%
}

\arxivpermalink{%
https://jacad.eth.link/rc_images/abdelwahab_3_18_15_1.pdf%
}

\FullResearchPaper

\title{Computational Extraction of Legal Causes via al-\d{S}abr wa
al-Taqsim}

\subtitle{A Set-Theoretic Formalization for Closed Fiqh Chapters}

\authorblock{%
Elnaser Abdelwahab *\\
makmad.org e.V.\\
Hannover (Germany)%
}

\dates{%
Received June 14, 2026;
Revised June 17, 2026;
Accepted June 26, 2026%
}

\keywords{%
Islamic jurisprudence, Usul al-Fiqh, al-\d{S}abr wa al-Taqsim,
legal causation, candidate \textit{`illah}, closed chapters, truth tables,
Boolean minimization, complete induction, computational fiqh%
}

\correspondingauthor{%
*Corresponding author: elnaser@hotmail.com%
}

\shortauthor{Abdelwahab, N.}
\firstpage{1}

\begin{document}

\makejacadtitle

\begin{jacadabstract}
This paper presents a set-theoretic formalization of the classical usuli
method of al-\d{S}abr wa al-Taqsim (Examination and Division) for
extracting legal causes (`\textit{`ilal}) within closed chapters of
jurisprudence. A computational algorithm is introduced that extracts
minimal operational rules from a truth table of juristic verdicts. The
principal result is that, given a complete truth table for a closed chapter,
the algorithm computes the minimal structural generators of the ruling and
eliminates all logically redundant attributes. The resulting structures
constitute admissible candidate causes for subsequent juristic evaluation.
\end{jacadabstract}

\makejacadkeywords

\section{Introduction}
\label{sec:introduction}

Traditional inference in Islamic jurisprudence (fiqh) often proceeds through
case-based reasoning, textual interpretation, analogical extension, and forms
of induction that are not usually exhaustive. In addition to this, and due in
part to the influence of important fiqh scholars such as al-Ghazali and
Averroes, juristic rulings were also reached using classical deductive
Aristotelian and Avicennian logic \cite{StreetGermann2021}. Toward the end of
his life, al-Ghazali explicitly stressed the importance of deductive
logic-based reasoning, treating logic as a general introduction to the
sciences and as a condition for trustworthy scientific inquiry
\cite{GhazaliMustasfa}.

The present paper studies a restricted setting in which both analogical and
deductive reasoning mechanisms can be represented by a finite conceptual model.
For the purposes of the formal model, we consider a school-relative corpus of
legal source material and fiqh-ontological definitions as fixed. Relative to
such a fixed corpus, the paper asks whether a given chapter can be represented
by a finite set of legally operative concepts and a complete ruling function
over those concepts.

If the relevant source corpus, ontology vocabulary, and operative legal rules
are treated as fixed relative to a given school, a further question arises:
where does the apparent open-endedness of fiqh application come from? How can
a finite ontological structure govern indefinitely many concrete cases?

The answer is twofold. First, the open-ended application of a legal rule
emerges from the contexts in which that rule is applied to concrete world
situations. Second, the premises of a legal rule, although fixed within the
model, may apply to indefinitely many situations because the categories used in
the sources are abstract. For example, in the chapter of Tahara
(purification), many real-world situations may fall under the abstract concept
of an ``excuse'' (`\textit{`Udhr}), which serves as a premise for the rule of
Tayammum. While keeping the rule fixed and binding its premises to this
abstract notion, one can map indefinitely many concrete situations to it. We
call the structured set of such abstract legally operative notions for a given
fiqh chapter the \textbf{Ontological Concepts Tree (OCT)}.

This leads to the epistemological distinction between
Avicennian/Aristotelian Logic and Fiqhi Qiyas, which governs how the OCT
operates:
\begin{itemize}
    \item \textbf{Avicennian/Aristotelian Logic} relies on deductive syllogisms
    where the middle term is an ontological cause. The connection between premise
    and conclusion is one of strict necessity; if the cause is present, the effect
    must physically follow. The rule $A \Rightarrow B$ admits no exceptions
    (monotonic).

    \item \textbf{Fiqhi Qiyas} relies on analogical reasoning where the middle
    term is a teleological/legal cause (`\textit{`Illah}). The connection
    yields probabilistic knowledge and operates as a default rule; the Lawgiver
    may suspend the ruling due to a competing legal objective or preventative
    (\textit{Mani'}). Thus, $A \Rightarrow B$ is non-monotonic: the presence
    of a preventative $M$ can yield $A \land M \Rightarrow \neg B$.
\end{itemize}

To formalize this system computationally and logically, we must introduce
strict methodological assumptions. The most critical is the
\textbf{School-Relative Approach}. We do not treat the OCT as a subject of
philosophical dispute or semantic fuzziness. Instead, we adopt the
\textbf{Closed-World Assumption (CWA)}: the OCT is strictly bounded by the
syntactic and semantic definitions explicitly laid out in the primary reference
text (\textit{Matn} or \textit{Mukhtasar}) of a specific school of thought
(\textit{Madhhab}). By anchoring the OCT to a specific school's text, the
relevant meanings are treated as fixed relative to the selected school text.

A Fiqh chapter for which the school-relative OCT is constant is defined as a
\textbf{Closed Chapter}. Under this assumption, the combinatorics behind the
chapter, even when exhaustive, yield a finite, constant state space. Because
the number of OCT nodes ($n$) and the valid preventatives ($M$) are strictly
finite and bounded by the text, the total number of possible states is $2^n$.

This has two methodological consequences.
\begin{enumerate}
    \item \textbf{Shift to Complete Induction}: For a closed chapter, we can
    transition the mode of Fiqh inference from incomplete induction to complete
    induction. The finite state space allows for the exhaustive construction
    of a truth table over the OCT. The non-monotonicity of Fiqhi Qiyas can
    be represented as bounded defeasibility within this table (where rows
    containing preventatives override default rows).

    \item \textbf{Structural Comparison of Madhhabs}: By formalizing the OCTs
    and their respective inference rules on a chapter-by-chapter basis, we gain
    a rigorous, practical method for comparing different schools of thought.
    Instead of comparing only isolated rulings or relying solely on informal
	semantic comparison, we can computationally compare the structural logic of
	different Madhhabs: we can map the differences in their OCTs, analyze how
	different schools define preventatives ($M$), and identify where their
	truth tables diverge.
\end{enumerate}

In summary, the Closed Chapter Assumption does not claim to capture the
infinite metaphysical essence of the law; rather, it rigorously models the
internal logic of a Madhhab as it stands. This makes systematic structural
comparison across legal schools more explicit.

Some of the practical benefits of this approach are that it simplifies the
extraction of legal causes (`\textit{`ilal}), a task among the central tasks of
Usul al-Fiqh. This paper does not attempt to prove ontological
fiqh-causation. Rather, it formalizes the structural extraction of candidate
causes from a complete operational model represented as a truth table.

The present paper should also be read as a domain-specific continuation of
earlier work on logically complete ontology-level computation, where
ontological term systems, decision components, inductive components, deductive
components, and rational response components were proposed for relational
database systems \cite{Abdelwahab2020Patent}. Here, the same general idea is
specialized to closed fiqh chapters: the ontology is not a database ontology,
but a school-relative Ontological Concepts Tree (OCT), and the response
component is not a database query response, but the extraction of structurally
admissible candidate \textit{`ilal}.

The framework presented here is a modelling framework: it applies when the
relevant chapter can be projected, as an approximation, into a finite,
school-relative concept vocabulary and a complete ruling table over that
vocabulary.

\section{Foundational Assumptions}
\label{sec:foundational-assumptions}

\subsection{Conceptual Representation of Juristic Reality}

The present framework is based upon the observation that fiqh does not operate
directly upon physical objects, but rather upon legally operative concepts
abstracted from those objects. A juristic ruling is therefore determined not by
the physical identity of a case, but by the conceptual attributes recognized as
legally relevant within the chapter.

\begin{axiom}[Conceptual Representation Principle]
\label{ax:conceptual-representation}
Let $\Omega$ denote the set of all real-world cases and let
$V=\{v_1, v_2,\dots, v_n\}$ denote the set of legally operative concepts of a
fiqh chapter. There exists a mapping $\phi: \Omega \to \{0, 1\}^n$ such that
every ruling depends exclusively upon $\phi(\omega)$ for each
$\omega \in \Omega$, and not upon any additional physical characteristics of
the case.
\end{axiom}

The mapping $\phi$ may be viewed as an Ontological Concepts Tree (OCT) encoding
that transforms infinitely many real-world situations into a finite conceptual
representation.

\subsection{Conceptual Finiteness}

The existence of a finite concept vocabulary immediately induces a finite legal
state space.

\begin{theorem}[Conceptual Finiteness]
\label{thm:conceptual-finiteness}
If $V=\{v_1, v_2,\dots, v_n\}$ is finite, then the legal state space is finite.
\end{theorem}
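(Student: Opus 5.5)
The plan is to make the phrase ``legal state space'' precise using Axiom~\ref{ax:conceptual-representation}, and then to bound its size by a simple counting argument. I will define the legal state space as the set of conceptual states that can be assigned to cases. By the Conceptual Representation Principle, every case $\omega \in \Omega$ enters the legal analysis only through its encoding $\phi(\omega) \in \{0,1\}^n$. The natural definition is therefore the image
\[
S := \phi(\Omega) \subseteq \{0,1\}^n .
\]
If one prefers the exhaustive convention of the introduction, where all combinations of the OCT are considered, one can take $S = \{0,1\}^n$ itself. Either way, $S$ is a subset of $\{0,1\}^n$, so it is enough to show that $\{0,1\}^n$ is finite.

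For the counting step, I identify $\{0,1\}^n$ with the set of functions $V \to \{0,1\}$, sending $v_i$ to the $i$-th coordinate. Since $|V| = n$, a function $V \to \{0,1\}$ amounts to $n$ independent binary choices. By the product rule, which one can make formal by induction on $n$ using $\{0,1\}^{n+1} \cong \{0,1\}^n \times \{0,1\}$, we get
\[
|\{0,1\}^n| = 2^n < \infty .
\]
Because a subset of a finite set is finite, $|S| \le 2^n$, and this proves the claim.

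I would close with a remark on why the result matters: the rulings, not just the states, form a finite collection. By the axiom each ruling factors through $\phi$, say as $R = r \circ \phi$ with $r : S \to \mathcal{R}$, where $\mathcal{R}$ is a finite set of verdicts. There are then at most $|\mathcal{R}|^{2^n}$ possible ruling functions on the chapter. This gives the finite truth table used later in the paper.

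The mathematics here is trivial. The only real obstacle is definitional: the paper has not formally defined ``legal state space,'' so the proof will fix this definition explicitly as above. It will also note that the argument uses only the finiteness of $V$ and the factorization through $\phi$ given by Axiom~\ref{ax:conceptual-representation}, and does not use the infinitude or any other structure of $\Omega$.
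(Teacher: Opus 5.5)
Your proposal is correct and is essentially the paper's argument: the paper also uses Axiom~\ref{ax:conceptual-representation} to place every represented legal state in $\{0,1\}^n$, bounds the number of states by $2^n$, and concludes finiteness from $n<\infty$. Defining the state space explicitly as $\phi(\Omega)$ and proving $|\{0,1\}^n|=2^n$ by induction only make precise what the paper leaves implicit.
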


\begin{proof}
By Axiom~\ref{ax:conceptual-representation}, every legal case is represented
by a valuation of the variables in $V$. Since each variable is binary, the
number of distinct represented legal states is at most
$|\{0, 1\}^n| = 2^n$. Because $n < \infty$, the quantity $2^n$ is finite.
Therefore the legal state space is finite.
\end{proof}

The significance of this theorem is that it separates the apparent infinity of
real-world instances from the finite conceptual structure upon which juristic
reasoning actually operates.

\subsection{Physical and Conceptual Closure}

The present framework does not require the physical world itself to be finite.
Instead it requires only the stability of the legally operative conceptual
vocabulary.

\begin{definition}[Physical Closure]
\label{def:physical-closure}
A domain is physically closed iff no new real-world entities can ever arise.
\end{definition}

\begin{definition}[Conceptual Closure]
\label{def:conceptual-closure}
A domain is conceptually closed iff no future legally operative concept exists
outside the concept set $V$.
\end{definition}

\begin{remark}
Physical closure is generally impossible. Conceptual closure may nevertheless
hold because infinitely many future cases can often be represented using a
fixed conceptual vocabulary.
\end{remark}

\subsection{Closed Chapters}

The notion of a Closed Chapter is the foundational assumption underlying the
remainder of the paper.

\begin{definition}[Closed Chapter]
\label{def:closed-chapter}
A fiqh chapter is said to be closed iff:
\begin{enumerate}
    \item its legally operative concepts are represented by a finite set
    $V=\{v_1, v_2,\dots, v_n\}$;

    \item every future case admits a representation
    $\phi(\omega) \in \{0, 1\}^n$;

    \item no future legally operative concept exists outside the set $V$.
\end{enumerate}
\end{definition}

A Closed Chapter therefore represents a conceptually complete operational model
of the chapter under investigation.

\subsection{Operational Closure and Conceptual Stability}

The purpose of closure is not to make a metaphysical claim about reality.
Rather, closure serves as an operational criterion governing whether the
conceptual vocabulary of the chapter remains sufficient.

\begin{principle}[Conceptual Stability]
\label{pr:conceptual-stability}
A chapter may be treated as operationally closed whenever all newly encountered
cases can be represented using the existing concept set $V$.
\end{principle}

Under this principle, the appearance of a new physical object or circumstance
does not invalidate closure provided that the object can be expressed through
already existing legally operative concepts.

\subsection{Preservation and Failure of Closure}

The following theorem characterizes when closure remains valid.

\begin{theorem}[Closure Preservation]
\label{thm:closure-preservation}
Let $C$ be a Closed Chapter with concept set $V$. Suppose a new case $\omega$
is encountered. If $\phi(\omega) \in \{0, 1\}^n$, then the chapter remains
closed.
\end{theorem}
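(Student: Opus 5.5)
The plan is to check the three clauses of Definition~\ref{def:closed-chapter} one by one for the chapter $C$ once the new case $\omega$ has been added to the cases it must handle. The concept set $V$ stays the same; the only thing that changes is the collection of cases that have been encountered. The argument is therefore a preservation argument: each clause either does not mention the new case at all, or is satisfied directly by the hypothesis $\phi(\omega)\in\{0,1\}^n$.

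\textbf{Clause 1 (finiteness of $V$).} No new concept is introduced, so $V=\{v_1,\dots,v_n\}$ is unchanged and still finite. By Theorem~\ref{thm:conceptual-finiteness}, the legal state space is still bounded by $2^n$.

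\textbf{Clause 2 (representability).} Every previously admissible case already had a representation in $\{0,1\}^n$, because $C$ was closed. The new case $\omega$ has one by hypothesis. So every case now in play is representable. Here I invoke Principle~\ref{pr:conceptual-stability}, which says a chapter may be treated as operationally closed whenever all newly encountered cases can be expressed with the existing $V$. Read this way, clause 2 is a condition on encountered cases, and the single new case meets it.

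\textbf{Clause 3 (no outside operative concept).} This is the step I expect to be the main obstacle. Representability of $\omega$ does not logically exclude that $\omega$ carries some further legally relevant feature outside $V$. To close this gap I will use Axiom~\ref{ax:conceptual-representation}: the ruling on $\omega$ depends exclusively on $\phi(\omega)$.
\begin{itemize}
\item Suppose, for contradiction, that $\omega$ brought in an operative concept $v\notin V$.
\item Then the ruling on $\omega$ would depend on $v$, that is, on information beyond $\phi(\omega)$.
\item This contradicts the Axiom.
\end{itemize}
So no concept outside $V$ becomes operative, and clause 3 continues to hold.

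With all three clauses verified, $C$ remains closed. A remark will note that the theorem is conditional: it holds relative to the Axiom and the operational reading in Principle~\ref{pr:conceptual-stability}.
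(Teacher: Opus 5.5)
Your argument is correct at the paper's level of rigor, but it handles the key step differently.

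The paper's proof is a single inference. Since $\phi(\omega)$ is expressible in $V$, no new legally operative concept has been introduced, so the vocabulary is unchanged and closure persists. In effect it identifies representability with the absence of new operative concepts, which is the operational reading of Principle~\ref{pr:conceptual-stability}.

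You instead check the three clauses of Definition~\ref{def:closed-chapter} separately. For clause~3 you explicitly decline that identification and close the gap by contradiction with Axiom~\ref{ax:conceptual-representation}. This buys transparency about which premise carries the weight. The cost is that the Axiom is asserted for every $\omega\in\Omega$. Your clause-3 argument therefore never uses the hypothesis $\phi(\omega)\in\{0,1\}^n$, and it would equally rule out the situation of Theorem~\ref{thm:closure-failure}. So the Axiom, not the theorem's hypothesis, is doing all the work.

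Clauses~2 and~3 of Definition~\ref{def:closed-chapter} quantify over \emph{all} future cases and concepts. Since $V$ is unchanged, both are inherited outright from the assumption that $C$ is closed. The most direct proof simply notes that nothing in the definition depends on which cases have been encountered. Your reading of clause~2, via Principle~\ref{pr:conceptual-stability}, as a condition only on encountered cases is weaker than the definition as written. This is harmless here, because the full clause is already available.
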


\begin{proof}
Since $\phi(\omega)$ is expressible using the existing concept set $V$, no new
legally operative concept has been introduced. Therefore the conceptual
vocabulary remains unchanged. Hence closure is preserved.
\end{proof}

The converse theorem identifies the precise failure condition of the framework.

\begin{theorem}[Closure Failure]
\label{thm:closure-failure}
Let $C$ be a Closed Chapter with concept set $V$. Suppose a newly encountered
case requires a concept $x \notin V$. Then the Closed Chapter Assumption is
false.
\end{theorem}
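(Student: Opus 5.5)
The plan is a direct contradiction argument against clause~(3) of Definition~\ref{def:closed-chapter}, together with the representation requirement of clause~(2). Suppose, for contradiction, that the Closed Chapter Assumption holds for $C$ with concept set $V=\{v_1,\dots,v_n\}$. Let $\omega$ be the newly encountered case, and say it \emph{requires} a concept $x\notin V$. I will read this to mean that $x$ is legally operative for $\omega$: at least one ruling of the chapter on $\omega$ depends on whether $x$ holds. Fixing this reading is the step I expect to be the main obstacle, because the paper never defines ``requires'' formally. I would make it precise as follows. There exist cases $\omega,\omega'\in\Omega$ with $\phi(\omega)=\phi(\omega')$ whose rulings differ, and the difference is attributable to $x$.

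With that reading fixed, the argument has two steps. First, by Axiom~\ref{ax:conceptual-representation} and clause~(2) of Definition~\ref{def:closed-chapter}, every ruling is a function of $\phi(\omega)\in\{0,1\}^n$ alone. Hence any two cases with identical valuations over $V$ receive identical rulings. Second, the pair $\omega,\omega'$ given by the requirement of $x$ has equal valuations but different rulings. That contradicts the functional dependence on $\phi$.

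Equivalently, $x$ is a legally operative concept that lies outside $V$. This directly violates clause~(3) of Definition~\ref{def:closed-chapter}, and also Definition~\ref{def:conceptual-closure}. Either route shows that the assumption that $C$ is closed cannot stand.

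I would finally remark that this result is the contrapositive companion of Theorem~\ref{thm:closure-preservation}. Together they characterize closure by representability in $\{0,1\}^n$. Any repair requires enlarging $V$ to $V\cup\{x\}$, which doubles the state space in Theorem~\ref{thm:conceptual-finiteness} from $2^n$ to $2^{n+1}$.
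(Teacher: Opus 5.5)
Your proof is correct and is essentially the paper's own argument: the paper simply notes that a legally operative $x\notin V$ contradicts Condition~(3) of Definition~\ref{def:closed-chapter}, and your formalization of ``requires'' is an optional sharpening of that same step (two cases with equal $\phi$-values but different rulings, contradicting the functional dependence in Axiom~\ref{ax:conceptual-representation}). One small correction to your closing remark: this theorem is the converse of Theorem~\ref{thm:closure-preservation} (strictly, its inverse), as the paper says, not its contrapositive, because a contrapositive is logically equivalent to the original and so could not combine with it to give a characterization.
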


\begin{proof}
The existence of $x \notin V$ contradicts Condition (3) of the Closed Chapter
definition. Therefore the chapter is not closed.
\end{proof}

This theorem provides a falsification criterion for the framework: whenever a
genuinely new legally operative concept is discovered, the chapter must be
re-opened and the model reconstructed.

\subsection{Complete Induction over Closed Chapters}

The principal consequence of closure is that it transforms juristic analysis
from incomplete induction into complete induction.

\begin{corollary}[Complete Induction over a Closed Chapter]
\label{cor:complete-induction}
If a chapter is closed, then exhaustive enumeration of all legal states is
possible.
\end{corollary}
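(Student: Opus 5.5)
The plan is to reduce the corollary directly to Theorem~\ref{thm:conceptual-finiteness}, using the definition of a Closed Chapter (Definition~\ref{def:closed-chapter}) to supply the hypotheses. First, from Condition~(1) of Definition~\ref{def:closed-chapter}, the legally operative concepts form a finite set $V=\{v_1,\dots,v_n\}$ with $n<\infty$. Applying Theorem~\ref{thm:conceptual-finiteness}, the legal state space is finite and is contained in $\{0,1\}^n$, which has exactly $2^n$ elements.

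Second, I would make precise what ``all legal states'' means. This is the step that needs care. We must ensure that enumerating $\{0,1\}^n$ actually covers every state that can arise, and not merely the states seen so far. Here I would invoke Conditions~(2) and~(3) of Definition~\ref{def:closed-chapter}. Condition~(2) guarantees that every case, including every future case $\omega$, satisfies $\phi(\omega)\in\{0,1\}^n$. Condition~(3), together with Theorem~\ref{thm:closure-preservation}, guarantees that no encountered case forces an enlargement of $V$. Hence the set of legal states is exactly the image $\phi(\Omega)\subseteq\{0,1\}^n$, and $\{0,1\}^n$ is a complete, fixed superset of it. By Axiom~\ref{ax:conceptual-representation}, the ruling depends only on $\phi(\omega)$, so enumerating valuations loses no legally relevant information.

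Third, I would exhibit the enumeration explicitly to show it is effective and not merely existential. Map each integer $k\in\{0,\dots,2^n-1\}$ to its $n$-bit binary expansion $(b_1,\dots,b_n)$, interpreted as the valuation $v_i\mapsto b_i$. This is a bijection onto $\{0,1\}^n$, and it lists every state exactly once in finitely many steps. It is precisely the row set of the truth table over the OCT. If one insists on enumerating only realizable states, one can filter this finite list, which remains finite and exhaustive.

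The main obstacle is the second step rather than the counting. The argument must make clear that exhaustiveness is relative to the fixed vocabulary $V$, and that closure is exactly what licenses treating the finite enumeration as covering future cases. I would state explicitly that if closure fails (Theorem~\ref{thm:closure-failure}), the enumeration is no longer complete. That makes the dependence of the corollary on the Closed Chapter hypothesis transparent.
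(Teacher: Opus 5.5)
Your proposal is correct and follows the paper's route. The paper's proof simply invokes Theorem~\ref{thm:conceptual-finiteness} to bound the represented state space by $2^n$ and concludes that a finite set can be enumerated; your use of Conditions~(2)--(3) of Definition~\ref{def:closed-chapter} to cover future cases, and your explicit binary-expansion listing of $\{0,1\}^n$, make that same argument more careful rather than changing it.
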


\begin{proof}
By Theorem~\ref{thm:conceptual-finiteness}, the represented legal state space
contains at most $2^n$ distinct states. Since this quantity is finite, every state
can be enumerated. Therefore complete induction over the chapter is possible.
\end{proof}

Consequently, the logical foundation of the present framework is the following
chain:
\begin{align*}
&\text{Conceptual Representation} \Rightarrow \text{Conceptual Finiteness} \\
\Rightarrow\ &\text{Closed Chapter} \Rightarrow \text{Finite State Space} \\
\Rightarrow\ &\text{Complete Induction.}
\end{align*}

All subsequent algorithms and correctness proofs depend upon this foundational
sequence.

\subsection{Modular Decomposition and Hierarchical Truth Tables}

The Closed Chapter Assumption guarantees that each chapter's legally operative
concept set $V$ is finite and fixed. However, a complex chapter such as
\emph{Ḥajj} (pilgrimage) may involve dozens of potentially relevant concepts
if flattened into a single truth table. This would lead to a large $n$ and a
combinatorial state space of $2^n$ rows---still finite, but impractical to
construct manually.

This hierarchical separation between lower-level factual representation and
higher-level ontological/legal response is analogous to the earlier distinction
between database-level and ontology-level processing in logically complete
relational systems \cite{Abdelwahab2020Patent}.

Fortunately, fiqh chapters are \textbf{modular by design}. A higher-level
chapter does not operate directly on the atomic concepts of its sub-chapters;
instead, it relies on the \emph{outcome rulings} of those sub-chapters as
atomic inputs. For example:
\begin{itemize}
    \item The \textbf{Ḥajj} chapter uses the ruling of \textbf{Ṭahāra}
    (purification) as a single binary condition: ``Is the pilgrim in a valid
    state of purity (\emph{ṭāhir})?''

    \item The \textbf{Ṣalāh} chapter similarly uses ``ṭāhir'' as a condition,
    but Ṣalāh itself may be a sub-chapter of Ḥajj (e.g., the prayer of
    \emph{ṭawāf}).
\end{itemize}

This modular structure allows us to apply a \textbf{divide-and-conquer}
strategy:
\begin{enumerate}
    \item For each atomic closed sub-chapter (e.g., Ṭahāra), construct its
    complete truth table over its own small variable set $V_{\text{sub}}$.

    \item Compute the ruling function $h_{\text{sub}}$ from that table using
    the algorithm of Section~\ref{sec:algorithm}.

    \item Treat $h_{\text{sub}}$ as a \textbf{single binary variable} in the
    truth table of the higher-level chapter.
\end{enumerate}

The combinatorial complexity is thus bounded by the size of the largest
\emph{atomic} truth table, not by the total number of atomic concepts across
all sub-chapters.

\subsubsection{Concrete Example: Two-Level Hierarchy (Ṭahāra $\to$ Ṣalāh)}

Let the atomic chapter \textbf{Ṭahāra} (purification for prayer) have the
following binary variables (as in the Tayammum appendix):

\[
\begin{array}{c|l|c}
\text{Variable} & \text{Description} & \text{Values} \\
\hline
h & \text{Person has minor impurity (\emph{ḥadath})} &
1 = \text{impure}, 0 = \text{pure} \\
w & \text{Water available} &
1 = \text{available}, 0 = \text{lacking} \\
u & \text{Valid excuse (\emph{ʿudhr}) for tayammum} &
1 = \text{excused}, 0 = \text{not excused} \\
t & \text{Prayer time entered} &
1 = \text{entered}, 0 = \text{not entered} \\
d & \text{Valid dust (\emph{ṣaʿīd}) available} &
1 = \text{available}, 0 = \text{not available}
\end{array}
\]

The complete truth table for Ṭahāra has $2^5 = 32$ rows. Running the
algorithm on this table yields the minimal structural form:

\[
h_{\text{ṭahāra}} = \underbrace{(h \land t \land d)}_{\text{shurūṭ}} \;\land\;
\underbrace{(\neg w \;\lor\; u)}_{\text{candidate ʿillah}}
\]

Now consider the higher-level chapter \textbf{Ṣalāh} (the ritual prayer). The
Ṣalāh chapter does not need to expand the five internal variables of Ṭahāra.
Instead, it uses a single input $P = h_{\text{ṭahāra}}$ (1 if purification
permits prayer, 0 otherwise). Additional local variables for Ṣalāh might be:

\[
\begin{array}{c|l|c}
\text{Variable} & \text{Description} & \text{Values} \\
\hline
n & \text{Intention (\emph{niyyah}) present} &
1 = \text{present}, 0 = \text{absent} \\
q & \text{Facing the \emph{qibla}} &
1 = \text{facing}, 0 = \text{not facing} \\
c & \text{Covering the \emph{ʿawrah}} &
1 = \text{covered}, 0 = \text{not covered}
\end{array}
\]

The truth table for Ṣalāh therefore has $2^{1+3} = 16$ rows, not
$2^{5+3} = 256$. The modular decomposition reduces the state space by a
factor of 16 in this case.

\subsubsection{Recursive Application to Deeper Hierarchies}

The same pattern applies recursively. The output of the Ṣalāh chapter,
$h_{\text{ṣalāh}}$, becomes a single binary variable in the \textbf{Ḥajj}
truth table. Suppose Ḥajj adds two local variables (e.g., \emph{iḥrām}
properly entered, \emph{saʿy} between Ṣafā and Marwa completed). Then the Ḥajj
truth table has only $2^{1+2} = 8$ rows, as summarised below:

\[
\begin{array}{c|c|c|c|c}
\text{Level} & \text{Chapter} & \text{Local variables} &
\text{Sub-chapter inputs} & \text{Total rows} \\
\hline
1 & \text{Ṭahāra} & 5 & 0 & 2^5 = 32 \\
2 & \text{Ṣalāh} & 3 & 1 & 2^{3+1} = 16 \\
3 & \text{Ḥajj} & 2 & 1 & 2^{2+1} = 8
\end{array}
\]

\textbf{Total computational work} = processing 32 rows (Ṭahāra) + 16 rows
(Ṣalāh) + 8 rows (Ḥajj) = 56 rows. In contrast, a flattened single-table
approach with all $5+3+2 = 10$ atomic variables would require
$2^{10} = 1024$ rows and a search space of $3^{10} = 59,049$
combinations---still feasible for a computer, but significantly larger and
harder for a human jurist to populate.

\subsubsection{Implications for the Framework}

\begin{enumerate}
    \item \textbf{Tractability:} The effective $n$ at any level of the
    hierarchy is the number of \textbf{sub-chapter outputs} plus the few
    \textbf{local variables} at that level. Since each sub-chapter is itself
    closed and small, the overall system remains computationally practical,
    and the $O(1)$ complexity result applies to each atomic table individually.

    \item \textbf{Reusability:} Once an atomic chapter (e.g., Ṭahāra) is fully
    tabulated and its minimal structure extracted, the resulting binary
    variable can be reused across all higher chapters (Ṣalāh, Ḥajj, Ṣawm,
    etc.). This eliminates redundant work.

    \item \textbf{Comparative madhhab analysis:} Modular decomposition allows
    researchers to compare madhhabs at each level independently. For example,
    two schools may agree on the Ṭahāra truth table but differ on the Ṣalāh
    truth table; the algorithm can identify where their rulings diverge.

    \item \textbf{Practical construction of truth tables:} A team of jurists
    can populate atomic tables once, validate them against primary sources, and
    then compose them into larger tables without revisiting lower-level
    details. This turns the construction of chapter-level models into a finite
    and auditable formalization task, while leaving interpretive and doctrinal
    validation to qualified juristic scholarship.
\end{enumerate}

\subsubsection{Remark on Non-Modular Dependencies}

Occasionally, a higher chapter may depend on a sub-chapter's
\emph{internal variables} rather than its final ruling---for example, when the
ruling of Ṣalāh depends not only on whether Ṭahāra is valid, but on \emph{why}
it is valid (e.g., tayammum vs. water). In such cases, the sub-chapter's output
must be refined into multiple binary flags (e.g., $P_{\text{water}}$ and
$P_{\text{tayammum}}$). The modular decomposition remains valid, but the
interface between chapters becomes a small vector of outputs instead of a
single bit. The number of such flags is still bounded by the sub-chapter's
number of minimal positive rules, which is itself bounded by the small
$n_{\text{sub}}$. Therefore, the combinatorial explosion remains contained.

\section{Algorithm}
\label{sec:algorithm}

We model the jurist's decision space as a Boolean truth table over the variable
set $V$ of a closed chapter. Let $V=\{v_1, v_2,\dots, v_n\}$ be the set of
binary decision variables and $h$ be the binary ruling variable. The algorithm
operates in three distinct phases: Rule Extraction, Rule Shortening, and Usul
Deduction.

\begin{algorithm}[H]
\caption{ExtractRules(U, V)}
\label{alg:extract-rules}
\begin{algorithmic}[1]
\Require Universe of rows $U$, Variable set $V$
\Ensure Set of valid rules $R$
\State $R \leftarrow \emptyset$
\For{each subset $P \subset V$ of size $j$}
    \For{each valuation $w$ of variables in $P$}
        \State $H_w \leftarrow \{h(s) \mid s \in U, s \text{ matches } w\}$
        \If{$|H_w| = 1$}
            \State $R \leftarrow R \cup \{(w \Rightarrow h_w)\}$
        \EndIf
    \EndFor
\EndFor
\State \Return $R$
\end{algorithmic}
\end{algorithm}

\begin{algorithm}[H]
\caption{ShortenRules(R)}
\label{alg:shorten-rules}
\begin{algorithmic}[1]
\Require Set of valid rules $R$
\Ensure Set of minimal rules $M$
\State $M \leftarrow R$
\For{each $R_j \in R$}
    \For{each $R_k \in M$}
        \If{$R_k \subset R_j$ and $R_k$ implies same ruling as $R_j$}
            \State Remove $R_j$ from $M$
        \EndIf
    \EndFor
\EndFor
\State \Return $M$
\end{algorithmic}
\end{algorithm}

\begin{algorithm}[H]
\caption{DeduceUsul(M+, M-)}
\label{alg:deduce-usul}
\begin{algorithmic}[1]
\Require Positive minimal rules $M^+$, Negative minimal rules $M^-$
\Ensure Shurut $S$, `Illah set $I$, Mawani' $W$
\State $S \leftarrow \bigcap_{R \in M^+} R$
\State $I \leftarrow \emptyset$
\For{each $R_i \in M^+$}
    \State $I_i \leftarrow R_i \setminus S$
    \If{$I_i \neq \emptyset$}
        \State $I \leftarrow I \cup \{I_i\}$
    \Else \Comment{Heuristic}
        \State $I_{\text{trigger}} \leftarrow S \setminus
        \{v \in S \mid v \text{ is framework variable}\}$
        \State $I \leftarrow I \cup \{I_{\text{trigger}}\}$
        \State $S \leftarrow S \setminus I_{\text{trigger}}$
    \EndIf
\EndFor
\State $W \leftarrow \emptyset$
\State $I_{\text{flat}} \leftarrow \bigcup_{I_k \in I} I_k$
\Comment{Flatten set of candidate causes into literal set}
\For{each $R_j \in M^-$}
    \If{$R_j \not\subseteq \text{Inv}(S) \cup
    \text{Inv}(I_{\text{flat}})$}
        \State $W \leftarrow W \cup \{R_j\}$
    \EndIf
\EndFor
\State \Return $S, I, W$
\end{algorithmic}
\end{algorithm}

Let $U$ denote the universe of rows of the truth table.

\subsection{Rule Extraction}

Generate all rules $R \Rightarrow h$ that hold universally within $U$.

\subsection{Rule Minimization}

Remove every rule that possesses a strictly smaller valid subrule. The
remaining set is denoted $M$.

\subsection{Usuli Deduction}

Let $M^+$ be the positive minimal rules and $M^-$ the negative minimal rules.
Compute:
\[
S = \bigcap_{R \in M^+} R
\]
Then for each $R_i \in M^+$ compute $I_i = R_i \setminus S$. The set $S$
represents candidate conditions, while the $I_i$ represent candidate causes.

\section{Formal Axiomatization}
\label{sec:formal-axiomatization}

\begin{axiom}[Shart]
\label{ax:shart}
A variable is a Shart iff it is necessary for the ruling.
\end{axiom}

\begin{axiom}[Structural Causal Correspondence]
\label{ax:structural-causal-correspondence}
Within a Closed Chapter, every minimal positive rule corresponds to one
admissible causal pathway in the jurist's operational model.
\end{axiom}

The role of the algorithm is to identify minimal structural generators of the
ruling rather than independently establish ontological fiqh-causation.

\begin{definition}[Candidate `Illah]
\label{def:candidate-illah}
For a minimal positive rule $R_i$, the Candidate `Illah is $I_i = R_i
\setminus S$.
\end{definition}

\begin{definition}[Inverse Literal]
\label{def:inverse-literal}
For a literal $x$, $\text{Inv}(x) = \neg x$. For a set $A$,
$\text{Inv}(A) = \{\neg x : x \in A\}$.
\end{definition}

\section{Correctness Results}
\label{sec:correctness-results}

\begin{theorem}[Correctness of Shart Extraction]
\label{thm:shart-extraction}
The algorithm correctly identifies all and only the Shurut.
\end{theorem}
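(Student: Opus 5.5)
The plan is to read Axiom~\ref{ax:shart} extensionally: a literal $x$ over $V$ is a \emph{Shart} iff every row $s\in U$ with $h(s)=1$ satisfies $x$. Under this reading, I will prove the set identity
\[
S=\bigcap_{R\in M^+}R=\{x : x \text{ is necessary for } h=1\}.
\]
Throughout I will use the complete-table hypothesis provided by Corollary~\ref{cor:complete-induction}: $U$ contains all $2^n$ valuations, each labelled with its verdict. I will also read Algorithm~\ref{alg:extract-rules} as ranging over all sizes $j=0,\dots,n$. Then $R$ is exactly the set of consistent partial valuations (terms) $w$ on which $h$ is constant over the matching rows. Algorithm~\ref{alg:shorten-rules} keeps exactly the inclusion-minimal such terms, so $M^+$ is the set of prime implicants of $h$.

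\textbf{Step 1 (coverage).} Every positive row $s$ is itself a full valuation, hence a valid positive rule. By finiteness, $s$ contains an inclusion-minimal valid positive subrule $R\in M^+$ with $s\models R$. Therefore $h=\bigvee_{R\in M^+}R$.

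\textbf{Step 2 ($S\subseteq$ Shurut).} If $x\in S$, then every $R\in M^+$ contains $x$. By Step 1 every positive row satisfies some such $R$, and hence satisfies $x$. So $x$ is necessary.

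\textbf{Step 3 (Shurut $\subseteq S$).} Suppose $x$ is necessary but some $R\in M^+$ omits $x$. There are two cases.
\begin{enumerate}
\item $R$ contains $\neg x$. Since $U$ is complete, some row matches $R$, and $R$ is a valid positive rule, so that row is positive yet falsifies $x$. This is a contradiction.
\item $R$ does not mention the variable of $x$. Extend $R$ by $\neg x$ and arbitrary values elsewhere. This full row lies in $U$ by completeness and is positive because $R$ is valid. It again falsifies $x$, a contradiction.
\end{enumerate}
The expected main obstacle is precisely this step. It is the only place where completeness of the truth table is indispensable: with missing rows, a rule could be vacuously or spuriously valid, and a necessary literal could be absent from it. I will therefore state the dependence on Corollary~\ref{cor:complete-induction} explicitly.

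Finally, I will address two technical points. First, the degenerate case $M^+=\emptyset$ (the ruling never holds): I will adopt the convention that every literal is then vacuously necessary, and set the empty intersection equal to the full literal set, so the identity still holds. Second, the heuristic branch of Algorithm~\ref{alg:deduce-usul}, which may move ``trigger'' literals out of $S$ when some $I_i=\emptyset$. I will clarify that the theorem concerns the set $S$ computed in line~1. The heuristic is a subsequent reclassification of some necessary literals as candidate causes. It does not alter which literals are necessary, so the ``all and only'' claim refers to the line~1 intersection.
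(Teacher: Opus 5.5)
Your proposal is correct and follows the same two-inclusion argument as the paper: a literal lies in $S$ iff it occurs in every positive minimal rule, hence is necessary, and conversely a necessary literal must occur in every valid positive rule. You also supply the justifications the paper leaves implicit, namely the coverage property $h=\bigvee_{R\in M^+}R$ for the first inclusion and completeness of the table for the second, though completeness of $U$ is a standing hypothesis of the setting (as in Theorem~\ref{thm:structural-soundness}), not something Corollary~\ref{cor:complete-induction} provides, since that corollary only asserts that exhaustive enumeration is possible.
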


\begin{proof}
The algorithm computes $S = \bigcap_{R \in M^+} R$. A variable belongs to $S$
iff it appears in every positive minimal rule. Therefore it is necessary for
every occurrence of the ruling. Conversely, every necessary condition must
appear in every positive rule. Hence $S$ is exactly the set of Shurut.
\end{proof}

\begin{theorem}[Correctness of Branch-Factor Extraction]
\label{thm:branch-factor-extraction}
For every minimal rule $R_i$, the algorithm correctly extracts
$I_i = R_i \setminus S$.
\end{theorem}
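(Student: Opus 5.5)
The claim is essentially definitional, so the proof plan is to match the computation performed by Algorithm~\ref{alg:deduce-usul} against Definition~\ref{def:candidate-illah}. I will treat each rule as a finite set of literals (a partial valuation $w$ produced by Algorithm~\ref{alg:extract-rules}). Set difference of literal sets is then well defined and deterministic. The statement is read for minimal \emph{positive} rules $R_i \in M^+$, since $S$ and $I_i$ are only defined relative to $M^+$.

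The argument has three steps.

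\textbf{Step 1: well-definedness of the inputs.} By Corollary~\ref{cor:complete-induction}, $U$ is finite and fully enumerated. Hence Algorithm~\ref{alg:extract-rules} terminates with every rule that holds universally on $U$, and Algorithm~\ref{alg:shorten-rules} returns exactly the rules with no strictly smaller valid subrule of the same ruling. So $M^+$ is a well-defined finite set of finite literal sets.

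\textbf{Step 2: the computation of $S$.} $S=\bigcap_{R\in M^+}R$ is a finite intersection, and by Theorem~\ref{thm:shart-extraction} it is exactly the set of Shurut.

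\textbf{Step 3: the loop.} For each $R_i$ the algorithm assigns $I_i \leftarrow R_i\setminus S$, which is literally Definition~\ref{def:candidate-illah}. By Step 2 this is the set of literals of $R_i$ that are not necessary conditions, i.e.\ the branch-specific factors.

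The main obstacle is the heuristic \texttt{else} branch. It mutates $S$ mid-loop when $I_i=\emptyset$, i.e.\ when $R_i\subseteq S$. For minimal rules this forces $R_i=S$, because $S\subseteq R_i$ always holds. Once $S$ is changed, later (and retroactively, earlier) $I_j$ are no longer computed against the original $S$.

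I would handle this case directly:
\begin{itemize}
\item If $R_i=S$ is a minimal positive rule, then no other rule $R_j\in M^+$ can strictly contain $S$. Such an $R_j$ would have $S=R_i$ as a strictly smaller valid subrule with the same ruling, contradicting minimality. So $M^+=\{S\}$.
\item In that case the loop runs once, and the heuristic only splits $S$ into framework variables and the trigger. The computed $I_i$ then equals $R_i\setminus S'$ for the updated $S'$, which is again consistent with Definition~\ref{def:candidate-illah} applied to the final $S$.
\item In every other case the \texttt{else} branch never fires, $S$ is constant throughout the loop, and $I_i=R_i\setminus S$ holds exactly for every $i$.
\end{itemize}
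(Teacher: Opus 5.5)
Your proposal is correct and takes the same route as the paper, whose proof is only the definitional unwinding: $S=\bigcap_{R\in M^+}R$ collects the literals common to all positive minimal rules, so removing it from $R_i$ leaves the branch-specific part $I_i=R_i\setminus S$. Your extra treatment of the heuristic \texttt{else} branch is a correct refinement that the paper's proof omits (it defers that case to Heuristic Principle~\ref{heur:illah-qasirah}): you show that $R_i\subseteq S$ forces $R_i=S$ and hence $M^+=\{S\}$, so the mid-loop update of $S$ cannot affect any other iteration.
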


\begin{proof}
Since $S = \bigcap_{R \in M^+} R$, $S$ contains precisely the universal
components of all positive rules. Removing $S$ from $R_i$ leaves exactly the
branch-specific component. Thus $I_i = R_i \setminus S$ is correctly extracted.
\end{proof}

\begin{corollary}
\label{cor:candidate-illah}
Under the Structural Causal Correspondence Axiom, each $I_i$ is an admissible
Candidate `Illah.
\end{corollary}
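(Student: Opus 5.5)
The corollary follows by chaining two earlier results; no new combinatorial work is needed. We use Theorem~\ref{thm:branch-factor-extraction}, the Structural Causal Correspondence Axiom (Axiom~\ref{ax:structural-causal-correspondence}), and Definition~\ref{def:candidate-illah}.

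First, fix an arbitrary minimal positive rule $R_i \in M^+$. By Algorithm~\ref{alg:shorten-rules} and the minimization step, $R_i$ holds universally in $U$ and has no strictly smaller valid subrule with the same ruling.

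Second, invoke Axiom~\ref{ax:structural-causal-correspondence}. It says $R_i$ corresponds to one admissible causal pathway in the jurist's operational model of the closed chapter. Closure is needed here so that $U$ is the complete table of Corollary~\ref{cor:complete-induction}, and minimality therefore has a definite meaning.

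Third, decompose this pathway as the disjoint union $R_i = S \cup (R_i \setminus S)$. This is valid because $S = \bigcap_{R \in M^+} R \subseteq R_i$.
\begin{itemize}
\item By Theorem~\ref{thm:shart-extraction}, the literals in $S$ are the Shurut, common to every positive pathway.
\item By Theorem~\ref{thm:branch-factor-extraction}, the remainder $R_i \setminus S$ is exactly the branch-specific component computed by the algorithm.
\end{itemize}
So the algorithm's output $I_i$ is precisely the part of an admissible pathway that distinguishes it from the others. By Definition~\ref{def:candidate-illah}, that part is the Candidate `Illah.

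The main obstacle is the degenerate case $I_i = \emptyset$, where $R_i = S$. Here the plain decomposition gives an empty candidate, and the statement "each $I_i$ is admissible" could be vacuous or false. The natural first attempt is to appeal to the heuristic branch of Algorithm~\ref{alg:deduce-usul}. In that branch the non-framework part of $S$ is reassigned as the trigger $I_{\mathrm{trigger}}$, and $R_i = S' \cup I_{\mathrm{trigger}}$ with $S' = S \setminus I_{\mathrm{trigger}}$. Repeating the argument with $S'$ in place of $S$ then shows $I_{\mathrm{trigger}}$ is the admissible branch component.

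This reassignment is outside the pure set-theoretic definition, so the result for this case is only as strong as the heuristic. If that is unsatisfactory, I would state the corollary for nonempty $I_i$ only, noting that an empty $I_i$ means the ruling has a single pathway. In that situation the admissible cause is the pathway $R_i$ itself, and admissibility follows directly from Axiom~\ref{ax:structural-causal-correspondence}.
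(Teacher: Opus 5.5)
Your proof is correct and follows the paper's route. The paper states this corollary without a separate proof, as the immediate combination of Theorem~\ref{thm:branch-factor-extraction}, the Structural Causal Correspondence Axiom~\ref{ax:structural-causal-correspondence}, and Definition~\ref{def:candidate-illah}, which is exactly your chain; your treatment of the degenerate case $I_i=\emptyset$ (equivalently $|M^+|=1$) matches how the paper handles it separately, through the heuristic branch of Algorithm~\ref{alg:deduce-usul} and Heuristic Principle~\ref{heur:illah-qasirah}, which the paper likewise says is not derivable from the truth table alone.
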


\begin{theorem}[Structural Classification of Negative Rules (Mani')]
\label{thm:negative-rule-classification}
Let $I_{\text{flat}} = \bigcup_i I_i$. A negative rule $R_j$ is classified as
absence-based if
\[
R_j \subseteq \text{Inv}(S) \cup \text{Inv}(I_{\text{flat}}).
\]
If this inclusion fails, then $R_j$ contains at least one literal not
accounted for by the absence of an extracted condition or candidate cause.
Such a literal is classified by the formal model as a candidate Mani'.
\end{theorem}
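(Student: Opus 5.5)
The theorem is essentially definitional: it says the classification performed by the final loop of Algorithm~\ref{alg:deduce-usul} is well defined. It also says that failure of the inclusion exhibits a specific literal. I would therefore prove it by unfolding the definitions and applying elementary set logic, with no combinatorial work.

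\textbf{Setup.} Treat each minimal negative rule $R_j \in M^-$, produced by Algorithm~\ref{alg:shorten-rules}, as a finite set of literals over $V$. Form $S$ and the sets $I_i$ exactly as in the Usuli Deduction step, and let $I_{\text{flat}}=\bigcup_i I_i$. By Definition~\ref{def:inverse-literal}, the set
\[
A := \text{Inv}(S)\cup\text{Inv}(I_{\text{flat}})
\]
is a finite set of literals. By Theorems~\ref{thm:shart-extraction} and~\ref{thm:branch-factor-extraction}, it consists precisely of the negations of the extracted Shurut and of the literals of the candidate causes. So $A$ is the set of literals expressing ``absence of an extracted condition or candidate cause''.

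\textbf{First clause.} If $R_j\subseteq A$, then every literal of $R_j$ is the negation of some element of $S$ or of $I_{\text{flat}}$. Hence the falsity of the ruling on the rows matching $R_j$ is fully explained by the absence of extracted positive structure. This is the definition of absence-based, and it coincides with the guard in Algorithm~\ref{alg:deduce-usul}, which does not place such an $R_j$ in $W$.

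\textbf{Second clause.} If $R_j\not\subseteq A$, the negation of set inclusion gives a literal $\ell\in R_j\setminus A$. Unfolding the definition of $A$, $\ell$ is neither $\neg x$ for any $x\in S$ nor $\neg x$ for any $x\in I_{\text{flat}}$. It is therefore not accounted for by the absence of any extracted condition or candidate cause. In this case Algorithm~\ref{alg:deduce-usul} adds $R_j$ to $W$, and we designate $\ell$ as the candidate Mani'. This matches the non-monotonic pattern $A\land M\Rightarrow\neg B$ from the introduction.

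\textbf{Possible obstacle.} The only point needing care is the heuristic branch of Algorithm~\ref{alg:deduce-usul}. There $S$ is shrunk and $I_{\text{trigger}}$ is added to $I$, so one must check that $A$ is computed from the final values of $S$ and $I$. This holds because the $W$-loop runs after the $I$-loop terminates. Moreover, moving literals from $S$ to $I_{\text{flat}}$ leaves the union $S\cup I_{\text{flat}}$ unchanged, so $A$ is unaffected. Beyond this check the statement is a classification by definition, and I expect no genuine difficulty.
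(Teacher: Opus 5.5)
Your proof is correct and follows the paper's own argument, which is likewise a direct unfolding of the definitions. The inclusion case is read off as absence-based, and failure of the inclusion yields a literal $\ell \in R_j \setminus (\text{Inv}(S)\cup\text{Inv}(I_{\text{flat}}))$ that is designated a candidate Mani'. Your extra check on the heuristic branch of Algorithm~\ref{alg:deduce-usul} is not in the paper but is correct: moving literals from $S$ into $I$ leaves $\text{Inv}(S)\cup\text{Inv}(I_{\text{flat}})$ unchanged, which reconciles the theorem's $S, I_i$ with the algorithm's final values.
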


\begin{proof}
If $R_j \subseteq \text{Inv}(S) \cup \text{Inv}(I_{\text{flat}})$, then the
negative rule is explained within the formal model by the absence of an
extracted condition or candidate cause. If this inclusion fails, then $R_j$
contains at least one literal outside
$\text{Inv}(S) \cup \text{Inv}(I_{\text{flat}})$. Such a literal is not
accounted for by mere absence of the extracted Shurut or candidate `Ilal.
Therefore the formal model classifies it as a candidate Mani'. \qedhere
\end{proof}

\begin{heuristic}[Recovery of the Candidate `Illah Qasirah]
\label{heur:illah-qasirah}
When $|M^+| = 1$, the intersection operation yields $S = R_1$, and therefore
$R_1 \setminus S = \emptyset$. In this special case, external semantic
knowledge supplied by the jurist is required to distinguish framework
variables from triggering variables. This procedure is heuristic and is not
derivable from the truth table alone.
\end{heuristic}

\section{Legitimacy of the Candidate `Illah}
\label{sec:legitimacy-candidate-illah}

\begin{definition}[False Flag]
\label{def:false-flag}
A false flag is a variable that co-occurs with the ruling but is logically
redundant.
\end{definition}

\begin{theorem}[Elimination of Logically Redundant Literals]
\label{thm:redundant-literals}
No Candidate `Illah extracted from a minimal rule can contain a false flag.
\end{theorem}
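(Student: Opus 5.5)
The plan is to argue by contradiction, using the fact that every element of the minimized set $M^+$ is irreducible with respect to validity on the complete universe $U$.

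First we make the notion of \emph{logically redundant} precise relative to a rule. A literal $x$ in a valid positive rule $R$ is redundant iff $R\setminus\{x\}$ is still a valid rule with the same ruling. Validity here means that every row of $U$ matching $R\setminus\{x\}$ has $h=1$, i.e.\ $|H_w|=1$ in Algorithm~\ref{alg:extract-rules}. So a false flag inside a candidate `Illah $I_i$ is a literal $x\in I_i\subseteq R_i$ that co-occurs with the ruling but whose removal from $R_i$ preserves validity over $U$. By Corollary~\ref{cor:complete-induction}, $U$ is the exhaustive enumeration of the closed chapter's states. Hence ``valid over $U$'' coincides with ``valid in the chapter'', and redundancy is a purely table-theoretic property.

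Next, suppose for contradiction that some $I_i=R_i\setminus S$ contains a false flag $x$. Then $x\in R_i$, and $R' := R_i\setminus\{x\}$ is a valid rule implying the same ruling. Since Algorithm~\ref{alg:extract-rules} enumerates every subset $P\subseteq V$ and every valuation of $P$, the rule $R'$ belongs to $R$. Moreover $R'\subsetneq R_i$ with the same ruling. Algorithm~\ref{alg:shorten-rules} therefore removes $R_i$ from $M$, so $R_i\notin M^+$. This contradicts the hypothesis that $R_i$ is a minimal positive rule. Hence no literal of $R_i$, and in particular none of $I_i\subseteq R_i$, is a false flag. The same argument shows that $S$ contains no redundant literal either, although the statement only requires this for $I_i$.

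I expect the main obstacle to be the formal one of pinning down ``logically redundant''. Definition~\ref{def:false-flag} is informal, and one could read redundancy more globally: $x$ is redundant if the ruling function $h$ does not depend on $x$ at all. I would handle this by adding a short lemma that global irrelevance implies local removability. If $h$ is independent of $x$, then any valid rule containing $x$ (or $\neg x$) stays valid after deleting that literal. The rows matching $R\setminus\{x\}$ are exactly the rows matching $R$ together with their $x$-flipped counterparts, and these flipped rows have the same $h$-value. Both readings therefore reduce to the contradiction above. A minor additional point is that the loop in Algorithm~\ref{alg:extract-rules} is written for ``subsets of size $j$''. I would note explicitly that $j$ ranges over all sizes $0,\dots,n$, so that $R'$ is indeed generated.
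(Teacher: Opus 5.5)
Your proposal is correct and is essentially the paper's own argument: a redundant literal $x\in I_i\subseteq R_i$ would give a strictly smaller valid rule $R'=R_i\setminus\{x\}\subset R_i$ with the same ruling, which contradicts the minimality of $R_i$. Your extra steps make explicit what the paper's one-line proof leaves implicit: you define ``redundant'' as removability over the complete table $U$, and you show that global independence of $h$ from $x$ reduces to that case.
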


\begin{proof}
Assume a Candidate `Illah contains a redundant attribute. Then there exists a
smaller rule $R' \subset R_i$ that still implies the ruling. This contradicts
the minimality of $R_i$. Therefore no extracted Candidate `Illah contains a
false flag.
\end{proof}

\section{Co-Extensiveness}
\label{sec:co-extensiveness}

\begin{definition}[Global Candidate `Illah]
\label{def:global-candidate-illah}
Let $I_1, \ldots, I_m$ be all Candidate `Ilal extracted from the positive
minimal rules $R_1,\ldots,R_m \in M^+$. Define $I^* = I_1 \lor I_2 \lor
\cdots \lor I_m$. The formula $I^*$ is called the Global Candidate `Illah.
\end{definition}

\begin{theorem}[Structural Decomposition and Co-Extensiveness]
\label{thm:structural-decomposition}
Assume that $M^+ = \{R_1,\ldots,R_m\}$ is the complete set of positive minimal
rules covering the positive region of the ruling function $h$. Let
$S = \bigcap_{R_i \in M^+} R_i$ denote the extracted Shurut, and let
$I_i = R_i \setminus S$ be the branch-specific component of each positive
minimal rule. If $I^* = I_1 \lor I_2 \lor \cdots \lor I_m$, then the ruling
function admits the structural decomposition $h = S \land I^*$. Consequently,
$h \iff (S \land I^*)$.
\end{theorem}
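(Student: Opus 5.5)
The plan is to identify each rule with the conjunction of its literals and to use the fact that $M^+$ covers the positive region. That covering hypothesis gives $h \iff R_1 \lor \cdots \lor R_m$. We then factor the common literals out of this disjunction by distributivity.

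\textbf{Step 1 (cover).} First we establish $h \iff \bigvee_{i=1}^m R_i$. For the direction $\Leftarrow$: each $R_i$ was produced by ExtractRules as a valid rule $R_i \Rightarrow h$. That is, every row of $U$ matching $R_i$ has $h=1$, so $\bigvee_i R_i \Rightarrow h$ on $U$. For the direction $\Rightarrow$, take a row $s$ with $h(s)=1$. The full valuation of $s$ on all of $V$ is itself a valid rule with conclusion $1$. Since the subset lattice is finite, ShortenRules leaves some minimal valid subrule $R_i \subseteq s$ with the same ruling, so $s$ satisfies $R_i$. This is exactly the covering hypothesis of the theorem. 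It can also be argued directly from complete induction (Corollary~\ref{cor:complete-induction}), because every state is enumerated in $U$.

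\textbf{Step 2 (factorization).} Since $S=\bigcap_i R_i \subseteq R_i$ for every $i$, we have $R_i = S \cup I_i$ as literal sets, with the union disjoint by Theorem~\ref{thm:branch-factor-extraction}. Read conjunctively, this gives $R_i \equiv S \land I_i$. Distributivity then yields
\[
\bigvee_i R_i \equiv \bigvee_i (S\land I_i) \equiv S \land \bigvee_i I_i = S \land I^*,
\]
and combining with Step 1 gives $h \iff S\land I^*$.

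\textbf{Main obstacle.} The main difficulty is the degenerate case in which some $I_i=\emptyset$, for instance $|M^+|=1$ as in Heuristic~\ref{heur:illah-qasirah}. An empty conjunction must be read as $\top$, which makes $I^*\equiv\top$ and $h\equiv S$. This is still consistent with the identity, so we adopt the convention that the empty conjunction is $\top$. The heuristic reassignment in DeduceUsul moves literals from $S$ into $I_{\text{trigger}}$. We will check that this preserves $S\land I_i$ literal-for-literal, so the product, and hence the identity, is unchanged. A further case is $M^+=\emptyset$, i.e.\ $h\equiv 0$. There we set $S=\top$ and take the empty disjunction $I^*=\bot$, and the statement holds trivially.
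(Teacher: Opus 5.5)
Your proof is correct and takes essentially the same route as the paper. You use the covering hypothesis to write $h \equiv R_1 \lor \cdots \lor R_m$, observe that $R_i \equiv S \land I_i$ because $S \subseteq R_i$, and factor $S$ out by distributivity. Checking both directions of the cover separately and handling the degenerate cases ($I_i=\emptyset$, which in fact forces $m=1$ by minimality, and $M^+=\emptyset$) is extra care that the paper leaves implicit.
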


\begin{proof}
By the definition of $S$ as the intersection of all positive minimal rules, we
have $S \subseteq R_i$ for every $R_i \in M^+$. By the definition
$I_i = R_i \setminus S$, each positive minimal rule decomposes as
$R_i = S \land I_i$. Since $M^+$ is assumed to be the complete set of positive
minimal rules covering the positive region of $h$, the ruling function is
represented by $h = R_1 \lor R_2 \lor \cdots \lor R_m$. Substituting
$R_i = S \land I_i$ gives $h = (S \land I_1) \lor (S \land I_2) \lor \cdots
\lor (S \land I_m)$. By distributivity of conjunction over disjunction, this
becomes $h = S \land (I_1 \lor I_2 \lor \cdots \lor I_m)$. By
Definition~\ref{def:global-candidate-illah}, we have $I^* = I_1 \lor I_2
\lor \cdots \lor I_m$. Therefore $h = S \land I^*$. Hence
$h \iff (S \land I^*)$. \qedhere
\end{proof}

\section{Structural Soundness}
\label{sec:structural-soundness}

\begin{theorem}[Structural Soundness]
\label{thm:structural-soundness}
Given: (1) a Closed Chapter, (2) a complete variable set $V$, and (3) a
complete truth table, the algorithm computes exactly the subset-minimal
implicating partial valuations for the ruling function $h$.
\end{theorem}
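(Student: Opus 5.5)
We prove the two inclusions separately: every rule returned by the algorithm is a subset-minimal implicating partial valuation for $h$, and every subset-minimal implicating partial valuation is returned.

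\textbf{Setup.} Hypotheses (1)--(3) together with Corollary~\ref{cor:complete-induction} make the universe $U$ equal to all of $\{0,1\}^n$, with $h(s)$ defined for every $s\in U$. A \emph{partial valuation} is a pair $(P,w)$ with $P\subseteq V$ and $w\in\{0,1\}^P$. It is \emph{implicating} if $h$ is constant on the set $U_w$ of rows matching $w$. The order is $(P',w')\sqsubseteq(P,w)$ iff $P'\subseteq P$ and $w'=w|_{P'}$, which is set inclusion of literal sets.

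We read the loop ``for each subset $P\subset V$ of size $j$'' in Algorithm~\ref{alg:extract-rules} as ranging over all $j=0,\dots,n$. Two points need care here. First, if $P=\emptyset$ (or $P=V$) were excluded, constant rulings (respectively full-row rules) would be missed, so we take $\subseteq$. Second, $U_w\neq\emptyset$ always holds, because the table is complete; this rules out vacuous rules with $|H_w|=0$.

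\textbf{Step 1: ExtractRules is exactly the set of implicating partial valuations.} Fix $(P,w)$. The algorithm adds $(w\Rightarrow h_w)$ iff $|H_w|=1$, i.e.\ iff $h$ takes a single value on $U_w$. Since $U_w\neq\emptyset$, this holds iff $(P,w)$ is implicating, and $h_w$ is then the implied value. Every pair $(P,w)$ is visited, so $R$ is the set of all implicating partial valuations, each labelled with its value.

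\textbf{Step 2: Monotonicity.} If $(P',w')\sqsubseteq(P,w)$ and $(P',w')$ is implicating with value $b$, then $U_w\subseteq U_{w'}$. Hence $(P,w)$ is implicating with the same value $b$. It follows that the ``same ruling'' test in Algorithm~\ref{alg:shorten-rules} is automatic, because both rules concern nonempty nested row sets.

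\textbf{Step 3: ShortenRules keeps exactly the minimal elements of $R$.} The outer loop runs over the original $R$ and removes $R_j$ whenever some $R_k\in M$ with $R_k\subsetneq R_j$ is found.

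\emph{Soundness.} A rule in $R$ with no proper implicating subrule is never removed, so it survives into $M$.

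\emph{Completeness.} Suppose $R_j$ has a proper implicating subrule. We must show it is removed, even though $M$ shrinks during the loop. This is the main obstacle: the test uses $M$ rather than $R$, so we need that some witnessing subrule is still in $M$ when $R_j$ is examined. Take a $\subseteq$-minimal element $R_k$ below $R_j$; it exists because $R$ is finite. By soundness, $R_k$ is never removed, so $R_k\in M$ at every stage. Therefore $R_j$ is deleted when the inner loop reaches $R_k$. (To avoid ambiguity from modifying $M$ while iterating, we interpret the deletion as acting on a copy.)

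\textbf{Conclusion.} The output $M$ consists precisely of the subset-minimal implicating partial valuations, split by value into $M^+$ and $M^-$. This proves Theorem~\ref{thm:structural-soundness}. Note that the statement concerns only the first two phases; DeduceUsul does not alter $M$.
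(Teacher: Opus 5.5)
Your proof is correct and follows the same route as the paper's proof, which asserts in three lines that Algorithm~\ref{alg:extract-rules} enumerates all valid implicating rules and Algorithm~\ref{alg:shorten-rules} removes exactly the non-minimal ones. Your extra care fills in steps the paper leaves implicit: reading the subset loop as covering all $j=0,\dots,n$, using completeness of the table to get $U_w\neq\emptyset$, observing that the same-ruling test is automatic by monotonicity, and exhibiting a never-removed minimal witness to handle the in-place shrinking of $M$.
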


\begin{proof}
Algorithm~\ref{alg:extract-rules} enumerates all valid implicating rules.
Algorithm~\ref{alg:shorten-rules} removes every non-minimal rule. The
remaining set $M$ is exactly the set of subset-minimal implicating partial
valuations for $h$. All subsequent classifications are computed from this set.
\end{proof}

\section{Semantic Boundary}
\label{sec:semantic-boundary}

The algorithm establishes structural legitimacy but not semantic suitability
(\textit{munasabah}). If irrelevant variables are supplied, the algorithm will
correctly identify their structural role within the truth table, but cannot
determine whether they reflect the objectives of the Lawgiver. That
responsibility remains with the faqih.

\section{Computational Complexity and Tractability}
\label{sec:complexity}

A rigorous analysis of the algorithm's complexity requires distinguishing
between the general computational complexity of exhaustive rule extraction and
the resource constraints imposed by the Closed Chapter Assumption. We
demonstrate that while the algorithm performs a task that is classically
exponential in the number of variables, a fixed closed chapter yields a
bounded finite computation whose upper bound depends only on the size of that
chapter's concept set.

\begin{theorem}[Bounded Resource Consumption for a Fixed Closed Chapter]
\label{thm:bounded-resource}
For any fixed closed chapter with a fixed concept set of size \(k\), the
algorithmic pipeline has a constant upper bound depending only on \(k\).
\end{theorem}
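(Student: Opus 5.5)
The plan is to bound each of the three phases (Algorithm~\ref{alg:extract-rules}, Algorithm~\ref{alg:shorten-rules}, Algorithm~\ref{alg:deduce-usul}) separately by an explicit function of $k$ alone, and then add the three bounds. By Theorem~\ref{thm:conceptual-finiteness} and Corollary~\ref{cor:complete-induction}, the complete truth table $U$ has at most $2^k$ rows. Since the chapter is closed (Definition~\ref{def:closed-chapter}), $k$ does not change when new cases arrive (Theorem~\ref{thm:closure-preservation}). Hence every quantity below is a fixed number once the chapter is fixed.

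\textbf{Extraction.} The candidate rules are exactly the partial valuations of $V$. Each variable is either omitted, set to $0$, or set to $1$, so there are exactly $\sum_{j=0}^{k}\binom{k}{j}2^j = 3^k$ candidates. For each candidate $w$, the algorithm scans $U$, tests whether each row matches $w$ (at most $k$ comparisons per row), and collects the set $H_w$. This costs $O(k\,2^k)$ per candidate. The extraction phase therefore costs $O(k\,2^k 3^k) = O(k\,6^k)$, and it yields $|R|\le 3^k$.

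\textbf{Shortening and deduction.} Algorithm~\ref{alg:shorten-rules} compares pairs of rules. Each subset test costs $O(k)$, so this phase costs $O(k\,9^k)$, and $|M^{+}|,|M^{-}|\le 3^k$. In Algorithm~\ref{alg:deduce-usul}, each literal set has at most $2k$ elements. The intersection $S$ and the differences $I_i$ cost $O(k\,3^k)$. The heuristic branch is a constant-time oracle call per rule, and I count it as $O(k)$. Forming $I_{\text{flat}}$, applying $\text{Inv}$, and running the inclusion tests for $M^-$ cost $O(k\,3^k)$ as well. Summing the three phases gives $T(k) = O(k\,9^k)$, which depends on $k$ only. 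Because $k$ is fixed for the chapter, this is a constant, and the same argument bounds memory by $O(k\,3^k)$.

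\textbf{Main obstacle.} The hardest part is making precise what ``constant'' means. The statement is not that the pipeline runs in $O(1)$ as a function of the input size uniformly over all chapters. That would be false, since the problem is exponential in $k$ in general. The claim is that once the closed chapter, and hence $k$, is fixed, the cost does not grow with the number of real-world cases $\omega\in\Omega$ handled by $\phi$. I would state this explicitly and note that the bound is uniform in $\Omega$, not in $k$. I also need the justification that the table is computed once, from at most $2^k$ distinct states, regardless of how many cases map to each state, which follows from Axiom~\ref{ax:conceptual-representation}. A secondary subtlety is the jurist heuristic in Algorithm~\ref{alg:deduce-usul}. I will treat it as a unit-cost oracle so that it does not affect the bound.
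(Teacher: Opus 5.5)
Your proposal is correct and takes essentially the same approach as the paper: you bound each of the three algorithms separately using the $3^k$ subset-valuation count, $|R|\le 3^k$, and the quadratic pairwise comparison in shortening, which gives a total of order $k\,9^k$ depending only on $k$. Your per-valuation projection cost of $O(k\,2^k)$ (hence $O(k\,6^k)$ for extraction) is more careful than the paper's stated $O(k)$ per projection, but the shortening phase's $O(k\,9^k)$ dominates in both accounts, so the conclusion is the same.
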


\begin{remark}
This is not a polynomial-time claim in an unbounded variable parameter \(n\);
rather, it is a bounded-domain claim relative to a fixed closed chapter.
\end{remark}

\begin{proof}
Unlike the Quine-McCluskey algorithm, which iteratively combines adjacent
minterms bottom-up, Algorithm~\ref{alg:extract-rules} operates top-down via
exhaustive subset-valuation projection. The mechanism proceeds as follows:
\begin{enumerate}
    \item Generate all possible subsets $s \subset V$ of the variable set.

	\item For a given subset $s$ of size $j$, generate all $2^j$ possible
	valuations $w$ of the variables in $s$.

    \item Project the valuation $w$ onto the truth table $U$. If all rows
    matching the valuation $w$ yield a uniform ruling (i.e., the projection
    reduces to a single value), then deduce the rule $r: (w \Rightarrow h_w)$.

    \item Accumulate all such valid rules into the rule set $R$.
\end{enumerate}

Algorithm~\ref{alg:shorten-rules} subsequently minimizes this set by removing any
rule strictly subsumed by a smaller valid rule. The resulting set $M$ is
mathematically equivalent to the set of prime implicants of the ruling function
\cite{Quine1952, McCluskey1956}, though derived via a different algorithmic
pathway.

To determine the complexity, we evaluate the number of subset-valuation
combinations Algorithm~\ref{alg:extract-rules} must project. For a variable
set of size $n$, the total number of valuations evaluated across all subsets is:
\begin{equation*}
\sum_{j=0}^{n} \binom{n}{j} 2^j = (1+2)^n = 3^n.
\end{equation*}
This operation is classically exponential with respect to the number of
variables, $n$. If $n$ were unbounded, the algorithm would require exponential
resources.

However, by the Closed Chapter Assumption, the legally operative concept set
$V$ is finite and fixed for a given Madhhab. Therefore, the number of variables
is strictly bounded such that $n = k$ for some constant $k > 0$.

Because $n$ is a fixed constant $k$ for the chapter under investigation, we can
establish strict constant upper bounds for all parameters in the algorithm:
\begin{enumerate}
    \item The number of subset-valuation combinations evaluated by
    Algorithm~\ref{alg:extract-rules} is exactly $3^k$. For each combination,
    projecting and checking for a uniform ruling requires at most $O(n)$
    operations. Thus, the runtime of Algorithm~\ref{alg:extract-rules} is
    $O(3^k \cdot k) = O(1)$.

    \item The number of rules generated by Algorithm~\ref{alg:extract-rules} is
    $m = |R|$. Since Algorithm~\ref{alg:extract-rules} generates at most one rule
    per evaluated valuation, $m \le 3^k$.

    \item The number of minimal rules output by Algorithm~\ref{alg:shorten-rules}
    is $p = |M^+| + |M^-|$. Since Algorithm~\ref{alg:shorten-rules} only removes
    rules from $R$ to form $M$, it follows that $p \le m \le 3^k$.
\end{enumerate}

Substituting these bounded parameters into the complexity of the remaining
algorithms:
\begin{itemize}
    \item Algorithm~\ref{alg:shorten-rules} runs in
    $O(m^2 \cdot k) \le O((3^k)^2 \cdot k) = O(9^k \cdot k) = O(1)$.

    \item Algorithm~\ref{alg:deduce-usul} runs in
    $O(p \cdot k) \le O(3^k \cdot k) = O(1)$.
\end{itemize}

Since all parameters governing the loops and operations across all three
algorithms depend exclusively on the constant $k$, relative to variation outside
the fixed chapter, the total time complexity of the pipeline is bounded by a
constant depending only on $k$. Thus, for a fixed closed chapter, the top-down
projection of all variable subsets is bounded by a constant depending only on
that chapter's fixed concept set. \qedhere
\end{proof}

\begin{remark}[Relation to Quine-McCluskey Minimization]
While Algorithm~\ref{alg:extract-rules} and Algorithm~\ref{alg:shorten-rules}
jointly compute the set of prime implicants---mathematically equivalent to the
output of the Quine-McCluskey algorithm---the top-down subset-valuation projection
method is better suited for the formalization of Fiqh. Quine-McCluskey
is not by itself tailored to the present juristic task for three reasons:
\begin{enumerate}
    \item \textbf{Multi-Valued Rulings (Aḥkām):} Quine-McCluskey is strictly
    designed for binary Boolean functions ($h \in \{0, 1\}$). In Fiqh, the
    ruling variable $h$ often takes on multiple discrete values (e.g.,
    \textit{Wājib}, \textit{Mandūb}, \textit{Mubāḥ}, \textit{Makrūh},
    \textit{Ḥarām}). Algorithm~\ref{alg:extract-rules} naturally extracts rules
    for any uniform ruling category by simply checking if the projection $H_w$
    contains a single value, regardless of the cardinality of the ruling set.

    \item \textbf{Symmetric Extraction of Mawāni' (Preventatives):}
    Quine-McCluskey is optimized to minimize the positive cover (the
    ``ON-set'') using ``don't-care'' states to simplify the circuit. Fiqh
    admits no ``don't-care'' states; the negative space must be explicitly
    analyzed to extract \textit{Mawāni'} (independent preventatives).
    Algorithm~\ref{alg:extract-rules} symmetrically extracts both positive ($M^+$)
    and negative ($M^-$) minimal rules, which is a strict prerequisite for the
    \textit{Usulī} deduction of \textit{Mawāni'} in Algorithm~\ref{alg:deduce-usul}.

    \item \textbf{Preservation of Causal Pathways:} Quine-McCluskey combines
    minterms bottom-up, which optimizes circuit logic but obscures the distinct
    causal pathways (\textit{ṭuruq al-ʿillah}). The top-down projection of
    Algorithm~\ref{alg:extract-rules} preserves the disjunctive normal form
    structure necessary to separate the \textit{Shurūṭ} ($S$) from the
    branch-specific \textit{'Ilal} ($I_i$), directly mapping the output to the
    epistemic structure of \textit{al-Ṣabr wa al-Taqsim}.
\end{enumerate}
\end{remark}

\begin{remark}
The significance of the preceding bounded-resource theorem is that it cleanly
separates the theoretical limits of Boolean logic from the operational reality
of Fiqh. In digital circuit design, $n$ can grow infinitely (e.g., $n > 50$),
rendering exhaustive subset projection computationally intractable. In Fiqh,
the Closed Chapter Assumption ensures $n$ remains small and strictly bounded by
the text of the Madhhab (typically $n \le 10$ for a single legal chapter).
The epistemological closure of the chapter mathematically guarantees the
computational tractability of the algorithm.
\end{remark}

\section{Conclusion}
\label{sec:conclusion}

Given a complete truth table for a Closed Chapter, the algorithm computes the
minimal structural generators of the ruling, eliminates all logically
redundant attributes, and produces formally admissible candidate
\textit{`ilal}. This is illustrated in the appendix through a reduced version
of the OCT for the Tahara chapter of the Shafi'i school. The algorithm
therefore constitutes a rigorous computational analogue of al-\d{S}abr wa al-Taqsim,
while leaving semantic validation to juristic expertise.

A natural question concerns the range of applicability of the present
framework. The framework does not claim that every chapter of fiqh is
necessarily a Closed Chapter, nor that the complete conceptual vocabulary of
every legal domain has already been discovered. Rather, it establishes that
whenever a chapter can be represented as a conceptually closed operational
model---namely, when all legally operative concepts have been identified and
all future cases can be expressed using that fixed conceptual vocabulary---the
method of \textit{al-\d{S}abr wa al-Taqsim} admits an exact computational
formalization through complete induction over a finite state space. The
validity of the mathematical results therefore depends not upon the universal
truth of the Closed Chapter Assumption, but upon its satisfaction within the
specific chapter under investigation. Consequently, the framework should be
understood as a conditional theory: if conceptual closure holds, then the
extraction of structurally admissible candidate causes reduces to the
computation of subset-minimal implicating partial valuations of the corresponding
ruling function.

\appendix

\includepdf[
  pages=-,
  pagecommand={\thispagestyle{fancy}},
  fitpaper=false
]{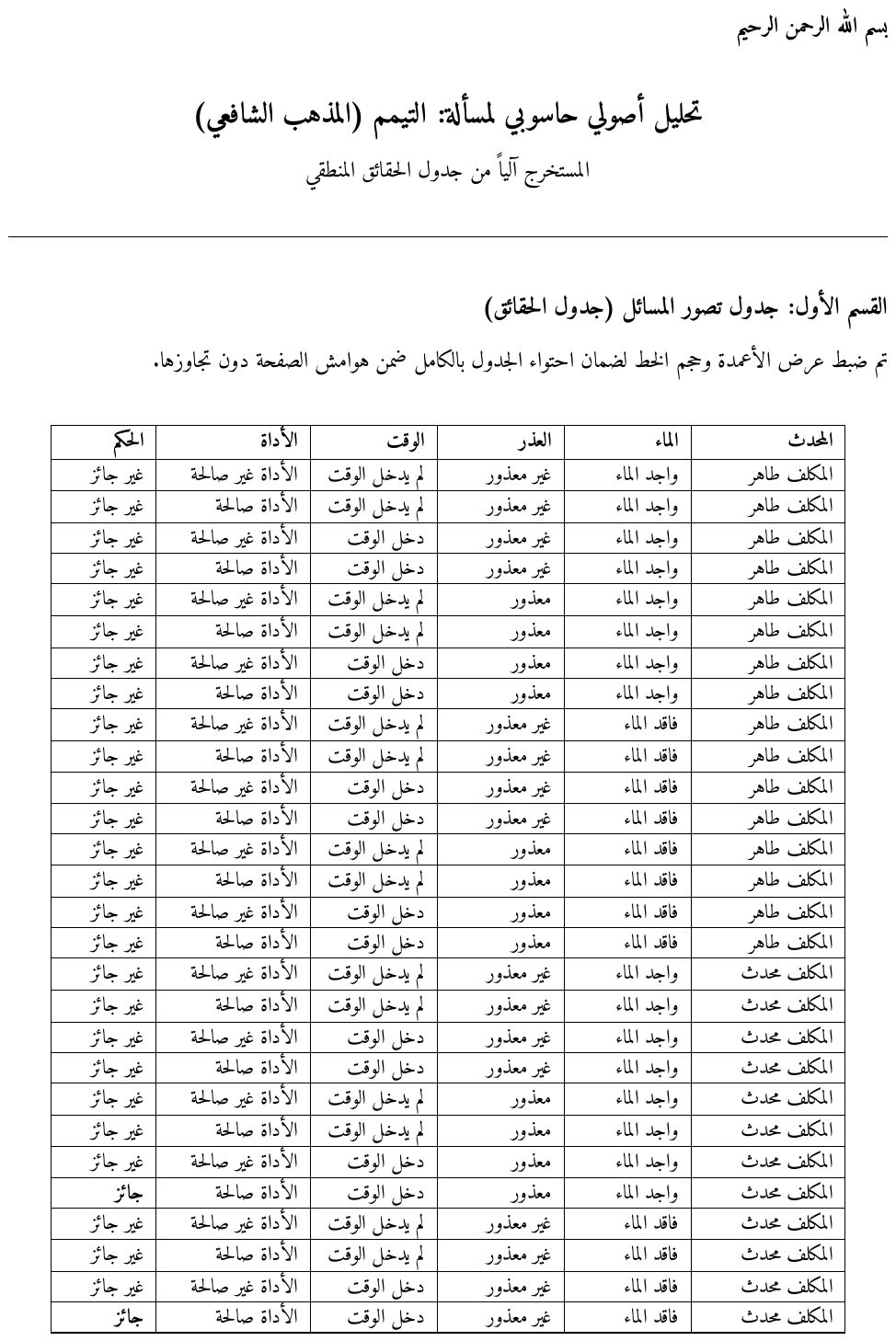}

\section*{Statements and Declarations}

\subsection*{Competing Interests}

The authors have no competing interests to declare.

\end{document}